\newcommand{\dd}{\text{d}}
\newcommand{\SGLD}{\text{SGLD}}
\newcommand{\mA}{\mathcal{A}}
\newcommand{\MO}{\textbf{MO}}
\newcommand{\E}{\mathbb{E}}
\newtheorem{theorem}{Theorem}
\newtheorem{proof}{Proof}
\definecolor{cvprblue}{rgb}{0.21,0.49,0.74}
\title{Enhancing and Accelerating Diffusion-Based Inverse Problem Solving through Measurements Optimization}
\author{Tianyu Chen\\
UT Austin\\
{\tt\small tianyuchen@utexas.edu}
\and
Zhendong Wang\\
UT Austin\\
{\tt\small zhendong.wang@utexas.edu}
\and
Mingyuan Zhou\\
UT Austin\\
{\tt\small mingyuan.zhou@mccombs.utexas.edu}
}
\begin{document}
\maketitle

\begin{abstract}
    Diffusion models have recently demonstrated notable success in solving inverse problems. However, current diffusion model-based solutions typically require a large number of function evaluations (NFEs) to generate high-quality images conditioned on measurements, as they incorporate only limited information at each step. To accelerate the diffusion-based inverse problem-solving process, we introduce \textbf{M}easurements \textbf{O}ptimization (MO), a more efficient plug-and-play module for integrating measurement information at each step of the inverse problem-solving process. This method is comprehensively evaluated across eight diverse linear and nonlinear tasks on the FFHQ and ImageNet datasets. By using MO, we establish state-of-the-art (SOTA) performance across multiple tasks, with key advantages: (1) it operates with no more than 100 NFEs, with phase retrieval on ImageNet being the sole exception; (2) it achieves SOTA or near-SOTA results even at low NFE counts; and (3) it can be seamlessly integrated into existing diffusion model-based solutions for inverse problems, such as DPS \cite{chung2022diffusion} and Red-diff \cite{mardani2023variational}. For example, DPS-MO attains a peak signal-to-noise ratio (PSNR) of 28.71 dB on the FFHQ 256 dataset for high dynamic range imaging, setting a new SOTA benchmark with only 100 NFEs, whereas current methods require between 1000 and 4000 NFEs for comparable performance.
\end{abstract}

\section{Introduction}
Inverse problems, which aim to recover the true signal $\bm{x_0}$ from a corrupted and noisy measurement $\bm{y}$, have broad applications across various fields, including image restoration \cite{song2023solving, kawar2022denoising, zhang2024improving, wang2022zero, chung2022diffusion, mardani2023variational} and physics \cite{razavy2020introduction, bertero2021introduction}. The measurement process often functions as a many-to-one mapping from $\bm{x_0}$ to $\bm{y}$, resulting in non-unique solutions. One approach to addressing inverse problems is through a Bayesian framework, where we aim to sample $\bm{x_0}$ from the posterior $p(\bm{x_0}|\bm{y}) \propto p(\bm{y}|\bm{x_0}) p(\bm{x_0})$, with $p(\bm{y}|\bm{x_0})$ representing the known measurement process and $p(\bm{x_0})$ serving as a prior for $\bm{x_0}$.

In cases where $\bm{x_0}$ represents an image, the inverse problem aligns with low-level vision tasks, specifically targeting image restoration tasks such as inpainting, deblurring, and super-resolution. In this setting, an effective prior $p(\bm{x_0})$ would be a deep image generation model \cite{shah2018solving}. A more accurate image prior enhances solutions to inverse problems; thus, given the SOTA performance of diffusion models in image generation \cite{dhariwal2021diffusion,song2023solving,karras2022elucidating}, it is natural to consider how diffusion models can serve as a prior.

\begin{figure}
    \centering
    \includegraphics[width=\columnwidth]{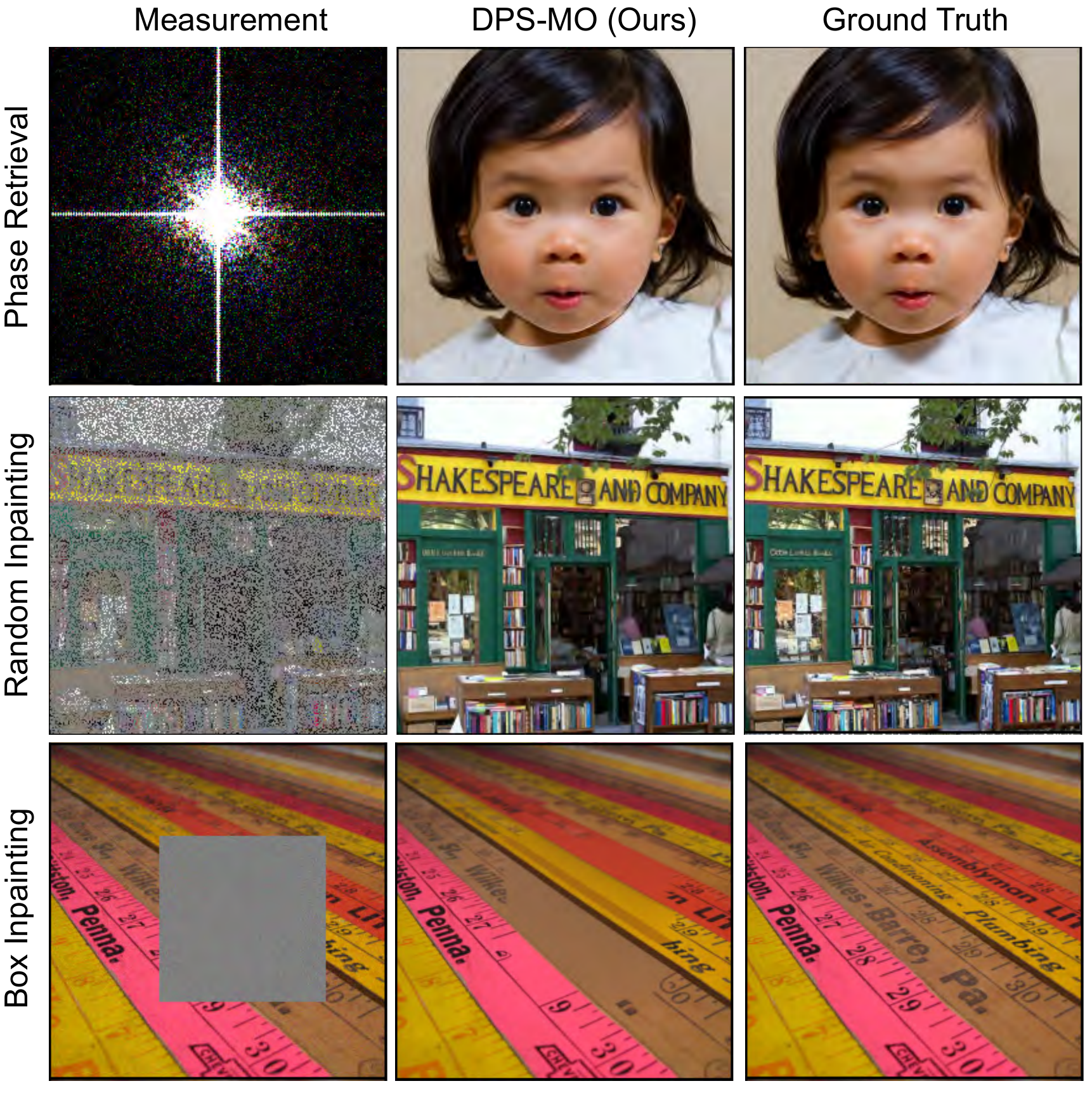}
    \caption{\textbf{DPS-MO examples of restoration.} We integrate the \MO module into DPS framework and present the recovered images with measurements and ground truth to demonstrate the performance of our method. All results are achieved with 100 NFEs.}
    \label{fig:top_figure}
\end{figure}

Existing methods for solving inverse problems using diffusion models generally fall into two categories. The first is sampling-based methods \cite{chung2022diffusion,wang2022zero,song2023solving,rout2024solving, song2023solving}, which aim to inject measurement information during inference and sample images starting from Gaussian noise. For example, DPS \cite{chung2022diffusion}, approximate $\nabla_{\bm x_t}\log p(\bm x_t|\bm y)$ during inference. The second category is training-based methods \cite{mardani2023variational,wang2024dmplug}, such as Red-diff , which directly optimize the images $\bm{x_0}$ guided by both the diffusion model and the measurement $\bm{y}$. While methods in both categories have demonstrated strong performance in solving inverse problems, they suffer from high NFE requirements—typically around 1000 NFEs—to maintain high-quality image generation. Additionally, they exhibit limited performance on nonlinear problems, particularly in tasks like phase retrieval, even though some of the methods are designed to solve the general inverse problems. The most recent advance in improving performance on nonlinear tasks is DAPS \cite{zhang2024improving}, which samples from $p(\bm{x_0}|\bm{y}, \bm{x_t})$ and achieves strong results in nonlinear tasks but requires as many as 4000 NFEs.

To achieve desirable
performance on nonlinear tasks while minimizing NFEs, we introduce a novel module, \textbf{M}easurement \textbf{O}ptimization (MO). \textbf{MO} combines iterative Stochastic Gradient Langevin Dynamics with querying the pretrained diffusion prior, efficiently integrating measurement information at each step. This approach reduces NFE requirements to as few as 50–100 NFEs for most tasks, while maintaining or achieving state-of-the-art performance across multiple benchmarks. \textbf{MO} is user-friendly and can be seamlessly integrated into existing diffusion-based solutions, such as DPS and Red-diff, enhancing both performance and efficiency. Qualitative examples are shown in Figure \ref{fig:top_figure}.

\section{Background}

Since this paper is focused on diffusion-based solutions for inverse problems, we begin by introducing some preliminary knowledge on diffusion-based generative models and inverse problems in this section. We also briefly discuss why previous solutions require high NFEs and face difficulties in solving nonlinear tasks.

\subsection{Diffusion Models}

Diffusion models generate images by initially sampling from Gaussian noise and then applying a trained score function to solve a reverse stochastic differential equation (SDE) or ordinary differential equation (ODE) to produce images. Using the notation from EDM \cite{karras2022elucidating}, we define a forward process that adds Gaussian noise to a clean image $\bm{x_0}$ via a perturbation kernel: $\bm{x_t} \sim \mathcal{N}(s(t)\bm{x_0}, s(t)^2\sigma(t)^2\bm{I})$, where $s(t)$ and $\sigma(t)$ are predefined time-dependent scaling and variance functions. This forward process corresponds to a reverse ODE with the same marginal distribution $p(\bm{x_t})$ for all $t$ as the forward process, given by

\begin{align}
\label{eq:reverse_sde}
    \dd \bm{x} = \left[ \frac{\dot{s}(t)}{s(t)} \bm{x} - s(t)^2 \dot{\sigma}(t) \sigma(t) \nabla_{\bm{x}} \log p \left( \frac{\bm{x}}{s(t)}; \sigma(t) \right) \right] \dd t,
\end{align}

The functions $s(t)$, $\sigma(t)$ are predefined scale and variance schedules, which may vary depending on the specific diffusion model design, such as VP \cite{song2023solving}, VE \cite{shah2018solving}, DDIM \cite{song2020denoising}, or EDM \cite{karras2022elucidating}. Here, $\dot{s}(t)$ and $\dot{\sigma}(t)$ represent the time derivatives of $s(t)$ and $\sigma(t)$, respectively. The score function $ \nabla_{\bm{x}} \log p \left( \frac{\bm{x}}{s(t)}; \sigma(t) \right)$ can be approximated by a score model trained using diffusion loss. Then to sample from the data distribution $p(\bm{x_0})$, we first draw initial samples from $N(\bm{0}, s(t_{\text{max}})^2 \sigma(t_{\text{max}})^2 \bm{I})$ and then solve the reverse ODE in Eq.~\ref{eq:reverse_sde}. Note that in this paper, we use $\bm{x}$ and $\bm{x_t}$ interchangeably when there is no ambiguity, for clarity of illustration and simplicity of notation.
\section{Inverse Problem under the Bayesian Framework}

The inverse problem aims to recover the true signal $\bm{x_0}$ from a corrupted and noisy measurement $\bm{y}$. Formally, we assume the measurement $\bm{y}$ is generated by the process
\begin{align}
    \bm{y} = \mA(\bm{x_0}) + \bm{n},
\end{align}
where $\mA$ is a known forward operator, which can be either linear or nonlinear, and typically represents a many-to-one mapping. The term $\bm{n}$ represents noise in the observation process, following a distribution $\bm{n} \sim N(0, \sigma_{\bm{n}}^2 \bm{I})$. Our goal is to recover $\bm{x_0}$ from the measurement $\bm{y}$. Since $\mA$ is a many-to-one and potentially nonlinear operator, directly inverting it is challenging. Moreover, because the solution is not unique, additional prior information about $\bm{x_0}$ is needed.

One approach to address the non-uniqueness is to introduce a prior distribution for $\bm{x_0}$. This leads us to a Bayesian framework, where we aim to sample $\bm{x_0}$ from the posterior $p(\bm{x_0}|\bm{y}) \propto p(\bm{y}|\bm{x_0}) p(\bm{x_0})$. In our setting, we focus on image restoration tasks, where the prior is a diffusion model trained on a corresponding dataset.

When using a diffusion generative model as the prior, we seek to sample $\bm{x_0}$ conditioned on $\bm{y}$. This modifies the unconditional score function $\nabla_{\bm{x_t}}\log p(\bm{x_t})$ to a conditional score:
\begin{align*}
    \nabla_{\bm{x_t}} \log p(\bm{x_t}|\bm{y}) = \nabla_{\bm{x_t}} \log p(\bm{x_t}) + \eta \nabla_{\bm{x_t}} \log p(\bm{y}|\bm{x_t}),
\end{align*}
where $\nabla_{\bm{x_t}} \log p(\bm{y}|\bm{x_t})$ is approximated by $\nabla_{\bm{x_t}} \log p(\bm{y}|\hat{\bm{x}}_0)$, with $\hat{\bm{x}}_0 = \E[\bm{x_0}|\bm{x_t}]$ obtained via a pretrained diffusion model. Then the conditional score can be approximated by $\nabla_{\bm{x_t}}\|\bm{y} - \mA(\hat{\bm{x}}_0)\|_2^2$. The parameter $\eta$ is a guidance scale, typically proportional to $1/\|\bm{y} - \mA(\hat{\bm{x}}_0)\|_2$, to achieve stable and optimal results. In image inverse problems at a resolution of 256, $\eta$ ranges between $1/20$ and $1/80$, which is significantly lower than the typical Classifier-Free Guidance (CFG) scale \cite{ho2022classifier}, often set at 7.5. Due to this small value of $\eta$, each step of DPS only gains limited information from $\bm{y}$, resulting in a requirement of 1000 NFEs to generate satisfactory results. Since this approach uses only the gradient of $\|\bm{y} - \mA(\hat{\bm{x}}_0)\|_2^2$, without momentum-based methods such as Adam \cite{kingma2014adam}, it has limited ability to escape local optima when $\mA$ is a highly non-convex operator.

Unlike sampling-based methods, training-based methods like Red-diff \cite{mardani2023variational} directly optimize the images themselves, combining the loss $\nabla_{\bm{x_0}}\|\bm{y} - \mA(\bm{x_0})\|_2^2$ with the Score Distillation Sampling technique \cite{poole2022dreamfusion}. This method also requires 1000 NFEs to achieve satisfactory results and similarly faces limitations when solving highly non-convex nonlinear tasks, such as phase retrieval.

Regardless of the method type, these approaches typically apply $\nabla\|\bm{y} - \mA(\cdot)\|_2^2$ only once per diffusion model NFE. This first-order optimization approach has limited effectiveness for non-convex problems. A natural question, then, is whether we can take multiple gradient steps to optimize $\|\bm{y} - \mA(\bm{x_0})\|_2^2$ for each NFE of the diffusion model and select an appropriate optimization method to help escape local optima in highly non-convex tasks. We present a method that meets these requirements in Algorithm \ref{alg:mo}, which is discussed in detail in the following section.

\begin{figure*}[t]
    \centering
    \includegraphics[width=\linewidth]{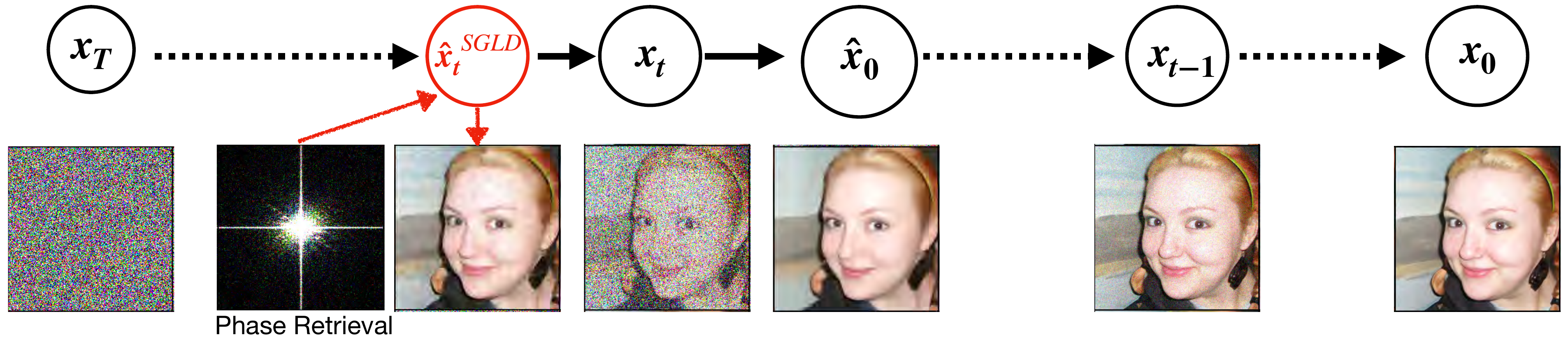}
    \caption{Workflow overview illustrating how our \MO module can be integrated into sampling-based methods (Full algorithm is in Algorithm \ref{alg:dps_mo}). Solving the \MO module with measurement $\bm{y}$ provides substantial information, reducing the NFE requirements for inference.}
    \label{fig:alg_overview}
\end{figure*}

\section{Measurements Optimization}

Existing methods for solving inverse problems typically take a single gradient step in $\|\bm{y} - \mA(\bm{x})\|_2^2$ alongside one diffusion NFE. However, unlike text-conditional image generation, the measurement $\bm{y}$ contains more detailed information. Thus, for each NFE in the diffusion model, it is advantageous to consider how to extract more information from $\bm{y}$ and incorporate it into the diffusion sampling steps. 

We now address two key aspects: (1) how to solve the optimization problem $\|\bm{y} - \mA(\bm{x_0})\|_2^2$ when $\mA$ can be highly non-convex, and (2) even if we obtain a good solution $\hat{\bm{x}}_0$ by solving this optimization, how to effectively integrate it into the diffusion-based image generation framework. We will answer these questions in the following two paragraphs and then present the detailed algorithm based on this approach.

\paragraph{Stochastic Gradient Langevin Dynamics (SGLD) as an Optimization Method.} To estimate $\bm{x}_0$ from $\|\bm{y} - \mA(\bm{x_0})\|_2^2$, we choose Stochastic Gradient Langevin Dynamics (SGLD) \cite{welling2011bayesian} as the optimization method. Since the forward operator can be highly non-convex and ill-posed, SGLD is well-suited for finding solutions in such non-convex problems \cite{welling2011bayesian,raginsky2017non}. In summary, we update $\bm{x}$ using
\begin{align*}
    \bm{x} \gets \bm{x} + \eta \cdot \nabla_{\bm{x}} \|\bm{y} - \mA(\bm{x})\|_2^2 + \sqrt{2\eta} \, \epsilon ,\quad \epsilon\sim N(0, \bm{I}),
\end{align*}
where $\eta$ is the learning rate. With sufficient steps, SGLD can escape local minima, and under certain regularity conditions, it guarantees asymptotic convergence to global minima in non-convex settings \cite{gelfand1991recursive}. An additional benefit of SGLD is that it adds Gaussian noise at each step, which can be effectively handled in subsequent diffusion model steps. Another well-known optimizer is momentum-based optimizers, such as Adam \cite{kingma2014adam}, which is also theoretically capable of finding global optima \cite{bock2019proof, chen2018convergence}. However, we empirically find that Adam performs worse than SGLD in Section\,\ref{sec:ablation}. SGLD demonstrates strong optimization capabilities in inverse problems, and the added Gaussian noise aligns well with the diffusion framework. For instance, in inpainting tasks, SGLD can introduce Gaussian noise to the masked regions, allowing the diffusion model an opportunity to modify these regions \cite{lugmayr2022repaint}, a feature that Adam lacks.

\paragraph{Querying the Diffusion Prior.} Due to optimization error, the solution after SGLD may not correspond to a unique or valid image. Therefore, it is essential to project $\hat{\bm{x}}^{\SGLD}$ onto the manifold defined by the diffusion prior. This is where the diffusion prior becomes useful: by adding noise to $\hat{\bm{x}}^{\SGLD}$ and then denoising it, we ensure that the refined $\hat{\bm{x}}^{\SGLD}$ lies on the diffusion prior manifold, allowing the reverse sampling process to continue. The intuition of such process is backed by the following theorem:

\begin{theorem}[Adapted from Section B3 in EDM \cite{karras2022elucidating}]
\label{thm:edm}
    Assume that the training set of the pretrained mean-predicted diffusion model $D_\theta$ consists of a finite number of samples $\{\bm{z}_1, \bm{z}_2, \dots, \bm{z}_n\}$. Given sufficient data and model capacity, and $s(t)=1,\forall t$, then for any query $\bm{x}$ with noise level $\sigma$, the pretrained denoised diffusion model will return
    \begin{align*}
        D(\bm{x}; \sigma) = \sum_{i=1}^n \bm{z}_i \frac{\mathcal{N}(\bm{x}; \bm{z}_i, \sigma)}{\sum_{i=1}^n \mathcal{N}(\bm{x}; \bm{z}_i, \sigma)},
    \end{align*}
\end{theorem}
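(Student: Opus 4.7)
The plan is to derive the closed form for $D(\bm x;\sigma)$ directly from the training objective that defines a mean-predicted diffusion model. Since $s(t)=1$, the forward kernel reduces to $\bm x = \bm z + \sigma \bm\epsilon$ with $\bm\epsilon \sim \mathcal{N}(\bm 0, \bm I)$, and the standard $L^2$ denoising objective at noise level $\sigma$ becomes
\begin{align*}
L(\theta) = \mathbb{E}_{\bm z \sim p_{\text{data}}}\,\mathbb{E}_{\bm\epsilon}\,\bigl\|D_\theta(\bm z + \sigma\bm\epsilon;\,\sigma) - \bm z\bigr\|_2^2.
\end{align*}
My first step would be to invoke the classical fact that the Bayes-optimal predictor under squared loss is the conditional expectation: for any measurable family $\{D(\cdot;\sigma)\}$, the population risk is minimized pointwise by $D^\star(\bm x;\sigma)=\mathbb{E}[\bm z \mid \bm x]$. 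The hypothesis of sufficient data and model capacity is precisely what is needed to assert that the trained $D_\theta$ attains this optimum exactly.

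Second, I would compute $\mathbb{E}[\bm z \mid \bm x]$ explicitly for the empirical prior $p_{\text{data}}=\frac{1}{n}\sum_{i=1}^n \delta_{\bm z_i}$. Under this joint law each atom $\bm z_i$ carries prior mass $1/n$, and the conditional density of $\bm x$ given $\bm z=\bm z_i$ is $\mathcal{N}(\bm x;\bm z_i,\sigma)$. Applying Bayes' rule, the posterior over the training index $i$ is
\begin{align*}
\Pr(\bm z=\bm z_i \mid \bm x) \;=\; \frac{(1/n)\,\mathcal{N}(\bm x;\bm z_i,\sigma)}{\sum_{j=1}^n (1/n)\,\mathcal{N}(\bm x;\bm z_j,\sigma)} \;=\; \frac{\mathcal{N}(\bm x;\bm z_i,\sigma)}{\sum_{j=1}^n \mathcal{N}(\bm x;\bm z_j,\sigma)},
\end{align*}
where the uniform prior factors $1/n$ cancel between numerator and denominator. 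Taking the expectation $\mathbb{E}[\bm z \mid \bm x] = \sum_{i=1}^n \bm z_i \Pr(\bm z=\bm z_i\mid \bm x)$ then yields exactly the weighted-average formula in the statement.

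The argument is essentially a concatenation of two standard facts (optimality of conditional expectation under $L^2$ loss, and Bayes' rule for a discrete prior with Gaussian likelihood), so there is no deep technical obstacle. The only mildly delicate step is the passage from "empirical minimizer" to "Bayes-optimal predictor"; this is exactly what the assumption on data quantity and model capacity is doing, and it is what upgrades an asymptotic statement into the exact pointwise identity claimed by the theorem.
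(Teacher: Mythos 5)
Your proof is correct and reaches the same formula, but it is packaged differently from the paper's. The paper proceeds by a direct variational computation: it writes $p_{\text{data}}$ as a mixture of Dirac deltas, expands the denoising loss $\mathcal{L}(D;\sigma)$ as an integral of a pointwise quadratic $\mathcal{L}(D;\bm x,\sigma)=\frac{1}{n}\sum_i \mathcal{N}(\bm x;\bm z_i,\sigma^2\bm I)\,\|D(\bm x;\sigma)-\bm z_i\|_2^2$, and solves for the minimizer at each $\bm x$ by setting the gradient with respect to $D(\bm x;\sigma)$ to zero. You instead invoke the general fact that the $L^2$-optimal predictor is the conditional expectation $\mathbb{E}[\bm z\mid\bm x]$ and then compute that expectation by Bayes' rule over the $n$ atoms, with the uniform prior weights $1/n$ cancelling. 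These are two presentations of the same underlying idea: the paper's first-order condition is precisely the elementary proof of the conditional-expectation characterization you cite, specialized to a discrete prior with Gaussian likelihood. What your route buys is a cleaner conceptual reading --- the mixture weights are literally the posterior probabilities $\Pr(\bm z=\bm z_i\mid\bm x)$, which makes the ``weighted mixture of training images'' interpretation immediate; what the paper's route buys is self-containedness, since it derives rather than cites the optimality of the posterior mean and exhibits the pointwise convexity explicitly. Neither argument has a gap; your remark that the ``sufficient data and model capacity'' hypothesis is exactly what licenses replacing the trained network by the population optimum is the same idealization the paper makes.
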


which implies that the diffusion model produces a weighted mixture of images from the training dataset based on the query $\bm{x}$, with weights determined by the likelihood of $\bm{x}$ relative to each training sample at noise level $\sigma$. As a result, the noise addition and denoising steps project the optimization solution $\hat{\bm{x}}^{\SGLD}$ back onto the training data manifold, allowing the reverse sampling process to continue. The detailed proof and further discussion are provided in Appendix \ref{app:proof}.

\paragraph{Measurement Optimization: Integrating the Two Components.}

With both components in place, at each diffusion time step $t$, we first apply SGLD to solve the optimization problem and obtain the solution $\hat{\bm{x}}_t^{\SGLD}$. Then, we query the diffusion prior. We add Gaussian noise to obtain $\bm{x_t} \sim \mathcal{N}(s(t)\hat{\bm{x}}_t^{\SGLD}, s(t)^2\sigma(t)^2 \bm{I})$, and feed this into the mean-predicted diffusion model $D_\theta$ to obtain $\hat{\bm{x}}_0(\bm{x_t})$. For simplicity, we use $\hat{\bm{x}}_0$ as an abbreviation for $\hat{\bm{x}}_0(\bm{x_t})$ when there is no ambiguity. The full algorithm of \textbf{MO} is detailed in Algorithm \ref{alg:mo}. The diffusion prior is typically robust due to the capability of the pretrained model, acting as a safety mechanism to consistently pull SGLD attempts back onto valid image manifolds. SGLD aggressively minimizes $|\bm{y} - \mA(\bm{x_0})|_2^2$, accelerating the inverse problem-solving process. Together, the two components enable \textbf{MO} to perform the inversion effectively.

One remaining question is whether we should perform SGLD only once and retain the same solution $\hat{\bm{x}}_t^{\SGLD}$ for all diffusion steps $t$, or apply SGLD at each diffusion step. We empirically find that running SGLD at every diffusion step yields the best performance, as shown in Appendix \ref{app:sgld}. 

The primary difference between these two approaches lies in how the initial estimation for SGLD is set. Since $\|\bm{y} - \mA(\bm{x_0})\|_2^2$ may have multiple solutions with similar global optima, the initial estimation can influence which optimal solution is reached and whether it resembles a valid image. By updating the initial estimate for \textbf{MO} at each step using the previous step’s output, $\hat{\bm{x}}_0$, we ensure that the initial estimate progresses along with the diffusion steps, increasing the chance of reaching a valid image-like solution.




\begin{algorithm}
\caption{\MO($D_\theta, \bm{y}, \bm{x}_{\text{init}}, N, \sigma(t)$)}
\begin{algorithmic}[1]
\State \textbf{Input:} Diffusion model $D_\theta$, measurement $\bm{y}$, initial estimate $\bm{x}_{\text{init}}$, number of optimization steps $N$, $\sigma(t)$.
\State $\bm{x}_t^{\SGLD} \gets \bm{x}_{\text{init}}$
\State \textcolor{gray}{\# Solve optimization by SGLD.}
\For{$k = 0, 1, 2, \dots, N-1$}
    \State $\bm{x}_t^{\SGLD} \gets \bm{x}_t^{\SGLD} + \eta \cdot \nabla \|\bm{y} - \mA(\bm{x}_t^{\SGLD})\|_2^2 + \sqrt{2\eta} \, \epsilon$ \\
    \quad where $\epsilon \sim \mathcal{N}(\bm{0}, \bm{I})$
\EndFor
\State \textcolor{gray}{\# Pull back to diffusion prior manifold.}
\State $\bm{x}_t \gets \bm{x}_t^{\SGLD} + \sigma(t)\epsilon, \epsilon \sim \mathcal{N}(\bm{0}, \bm{I})$
\State $\hat{\bm{x}}_0 \gets D_\theta(\bm{x}_t, \sigma(t))$
\State \Return $\hat{\bm{x}}_0$
\end{algorithmic}
\label{alg:mo}
\end{algorithm}

\paragraph{MO as a Plug-and-Play Module.} With Algorithm \ref{alg:mo}, we are equipped to enhance existing diffusion-based inverse problem solutions by integrating our proposed \textbf{MO} algorithm as a plug-in module. We incorporate \textbf{MO} into two distinct types of methods. First, we integrate \textbf{MO} into the sampling-based method DPS \cite{chung2022diffusion}; the full algorithm is provided in Algorithm \ref{alg:dps_mo} and is referred to as DPS-MO. We also integrate \textbf{MO} into the training-based method Red-diff \cite{mardani2023variational}; the complete algorithm is shown in Algorithm \ref{alg:red_diff_mo}, and we refer to it as Red-diff-MO.

Our experiments show that DPS-MO achieves better performance than Red-diff-MO, so for simplicity, we report DPS-MO's performance alongside other baselines in Section \ref{sec:exp_result}. However, we also observe that \textbf{MO} significantly enhances the performance of Red-diff, particularly for nonlinear tasks, achieving strong results with only 100 NFEs, as demonstrated in Section \ref{sec:red_diff_exp}. A visualization of how DPS-MO operates is provided in Figure \ref{fig:alg_overview}.

\begin{algorithm}
\caption{DPS-MO with arbitrary $\sigma(t)$ and $s(t)$.}
\begin{algorithmic}[1]
    \State Sample $\bm{x}_T \sim \mathcal{N}\left(0, \sigma^2(t_N) s^2(t_N) I\right)$ 
    \State $\hat{\bm{x}}_0 \gets D_\theta\left(\frac{\bm{x}_{t_N}}{s(t_N)};\sigma(t_N)\right)$
    \For{$i \in \{N, \dots, 1\}$} 
        \State \textcolor{red}{$\hat{\bm x}_0\leftarrow$MO($D_\theta, \bm{y}, \hat{\bm{x}}_0, N, \sigma(t)$)}
        \State $d_i \gets \frac{\dot{\sigma}(t_i)}{\sigma(t_i)} + \frac{\dot{s}(t_i)}{s(t_i)} \bm{x}_i - \frac{\dot{\sigma}(t_i) s(t_i)}{\sigma(t_i)} \textcolor{red}{\hat{\bm{x}}_0}$
        \State $\bm{x}_{i-1} \gets \bm{x}_i + (t_{i-1} - t_i) d_i$ 
    \EndFor
    \State \Return $\bm{x}_0$
\end{algorithmic}
\label{alg:dps_mo}
\end{algorithm}

\begin{algorithm}
\caption{Red-diff-MO with arbitrary $\sigma(t)$ and $s(t)$.}
\begin{algorithmic}[1]
    \State Sample $\mu \sim \mathcal{N}\left(0, \bm{I}\right)$ 
    \State Sample $\bm{x}_T \sim \mathcal{N}\left(0, \sigma^2(t_N) s^2(t_N) I\right)$ 
    \State $\hat{\bm{x}}_0 \gets D_\theta\left(\frac{\bm{x}_{t_N}}{s(t_N)};\sigma(t_N)\right)$
    \For{$i \in \{N, \dots, 1\}$} 
         \State \textcolor{red}{$\hat{\bm{x}}_0 \gets \text{MO}(D_\theta, \bm{y}, \hat{\bm{x}}_0, N, \sigma(t))$}
         \State $\bm{x}_t \gets s(t)\mu + s(t)\sigma(t)\epsilon$, \quad where $\epsilon \sim \mathcal{N}(0, \bm{I})$
        \State $\hat{\epsilon} \gets \frac{\bm{x}_t - s(t) \textcolor{red}{\hat{\bm{x}}_0}}{s(t)\sigma(t)}$
        \State $\text{loss} \gets \sigma(t) \cdot \text{sg}(\hat{\epsilon} - \epsilon)^T \mu$
        \State $\mu \gets \text{OptimizerStep}(\text{loss})$
    \EndFor
    \State \Return $\bm{x}_0$
\end{algorithmic}
\label{alg:red_diff_mo}
\end{algorithm}


\begin{table*}[htbp]
\centering
\resizebox{0.9\textwidth}{!}{
\begin{tabular}{l|cc|cc|cc|cc|cc|c}
\hline
\multirow{2}{*}{Method} & \multicolumn{2}{c|}{SR ($\times$4)} & \multicolumn{2}{c|}{Inpaint (Box)} & \multicolumn{2}{c|}{Inpaint (Random)} & \multicolumn{2}{c|}{Gaussian deblurring} & \multicolumn{2}{c|}{Motion deblurring} & \multirow{2}{*}{NFE} \\
& LPIPS$\downarrow$ & PSNR$\uparrow$ & LPIPS$\downarrow$ & PSNR$\uparrow$ & LPIPS$\downarrow$ & PSNR$\uparrow$ & LPIPS$\downarrow$ & PSNR$\uparrow$ & LPIPS$\downarrow$ & PSNR$\uparrow$ & \\
\hline
PnP-ADMM\cite{chan2016plug} & 0.725 & 23.48 & 0.775 & 13.39 & 0.724 & 20.94 & 0.751 & 21.31 & 0.703 & 23.40 &1000\\
LatentDAPS\cite{zhang2024improving} & 0.275 & 27.48 & 0.194 & 23.99 & 0.157 & 30.71 & 0.234 & 27.93 & 0.283 & 27.00 & 1000 \\
PSLD\cite{rout2024solving} & 0.287 & 24.35 & 0.148 & 24.22 & 0.221 & 30.31 & 0.316 & 23.27 & 0.336 & 22.31 & 1000 \\
ReSampl\cite{song2023solving} & 0.392 & 23.29 & 0.184 & 20.06 & 0.140 & 29.61 & 0.255 & 26.39 & 0.198 & 27.41 & 1000 \\
DDRM\cite{kawar2022denoising} & 0.210 & 27.65 & 0.159 & 22.37 & 0.218 & 25.75 & 0.236 & 23.36 & - & - &1000\\
DDNM\cite{wang2022zero} & {0.197} & {28.03} & 0.235 & \underline{24.47} & {0.121} & {29.91} & 0.216 & {28.20} & - & - &1000\\
DPS\cite{chung2022diffusion} & 0.260 & 24.38 & 0.198 & 23.32 & 0.193 & 28.39 & {0.211} & 25.52 & 0.270 & 23.14 & 1000\\
DAPS\cite{zhang2024improving} & \textbf{0.177} & \underline{29.07} & \underline{0.133} & {24.07} & \textbf{0.098} & \underline{31.12} & \textbf{0.165} & \textbf{29.19} & \underline{0.157} & \underline{29.66} & 1000\\
\hline
DPS-MO(Ours) & \underline{0.184} & \textbf{29.18} & \textbf{0.113} & \textbf{24.48} & \underline{0.110} & \textbf{31.38} & \underline{0.199} & \underline{28.25} & \textbf{0.133} & \textbf{31.24} & \textbf{50} \\
\hline
\end{tabular}
}
\caption{Quantitative evaluation on FFHQ 256$\times$256. Performance comparison of different methods on various linear tasks. The value shows the mean over 100 images. We use bold font to highlight the best scores and underline to highlight the second-best scores.}
\label{tab:ffhq_linear}
\end{table*}

\begin{table*}[htbp]
\centering
\resizebox{0.65\textwidth}{!}{
\begin{tabular}{l|cc|cc|cc|c}
\hline
\multirow{2}{*}{Method} & \multicolumn{2}{c|}{Phase retrieval} & \multicolumn{2}{c|}{Nonlinear deblurring } & \multicolumn{2}{c|}{High dynamic range} & \multirow{2}{*}{NFE} \\
& LPIPS$\downarrow$ & PSNR$\uparrow$ & LPIPS$\downarrow$ & PSNR$\uparrow$ & LPIPS$\downarrow$ & PSNR$\uparrow$ &\\
\hline
ReSample\cite{song2023solving} & 0.406  & 21.60  & 0.185 & 28.24 & {0.182}  & 25.65  & 1000 \\
DPS\cite{chung2022diffusion} & 0.410  & 17.64  & 0.278 & 23.39  & 0.264  & 22.73  & 1000 \\
RED-diff\cite{mardani2023variational} & 0.596  & 15.60 & {0.160} & {30.86}  & 0.258  & 22.16 & 1000 \\
LatentDAPS\cite{zhang2024improving} & {0.199}  & {29.16} & 0.235 & 28.11 & 0.223  & {25.94} & 4000 \\
DAPS\cite{zhang2024improving} & \textbf{0.121}  & \textbf{30.72}& {0.155}  & {28.29}  & \underline{0.162}  & \underline{27.12} & 4000\\
\hline
DPS-MO(Ours) & \underline{0.133} & \underline{30.33} & \underline{0.161} & \underline{29.35} & \textbf{0.139} & \textbf{28.71} & \textbf{100} \\ 
\hline
\end{tabular}
}
\caption{Quantitative evaluation on FFHQ 256$\times$256. Performance comparison of different methods on various nonlinear tasks. The means are computed over 100 images. We use bold font to highlight the best scores and underline to highlight the second-best scores.
}
\label{tab:ffhq_nonlinear}
\end{table*}

\begin{table*}[htbp]
\centering
\resizebox{0.9\textwidth}{!}{
\begin{tabular}{l|cc|cc|cc|cc|cc|c}
\hline
\multirow{2}{*}{Method} & \multicolumn{2}{c|}{SR ($\times$4)} & \multicolumn{2}{c|}{Inpaint (Box)} & \multicolumn{2}{c|}{Inpaint (Random)} & \multicolumn{2}{c|}{Gaussian deblurring} & \multicolumn{2}{c|}{Motion deblurring} & \multirow{2}{*}{NFE} \\
& LPIPS$\downarrow$ & PSNR$\uparrow$ & LPIPS$\downarrow$ & PSNR$\uparrow$ & LPIPS$\downarrow$ & PSNR$\uparrow$ & LPIPS$\downarrow$ & PSNR$\uparrow$ & LPIPS$\downarrow$ & PSNR$\uparrow$ & \\
\hline
PnP-ADMM\cite{chan2016plug} & 0.724 & 22.18 & 0.702 & 12.61 & 0.680 & 20.03 & 0.729 & 20.47 & 0.684 & 24.23 &1000\\
LatentDAPS\cite{zhang2024improving} & 0.343 & 25.06 & 0.340 & 17.19 & 0.219 & 27.59 & 0.349 & 25.05 & 0.296 & 26.83 &1000\\
PSLD\cite{rout2024solving} & 0.360 & 25.42 & 0.465 & {20.10} & 0.337 & \textbf{31.30} & 0.390 & 25.86 & 0.511 & 20.85 &1000\\
ReSample\cite{song2023solving} & 0.370 & 22.61 & 0.262 & 18.29 & {0.143} & 27.50 & \underline{0.254} & 25.97 & 0.227 & 26.94 & 1000 \\
DDRM\cite{kawar2022denoising} & {0.284} & 25.21 & {0.229} & 19.45 & 0.325 & 23.23 & 0.341 & 23.86 & - & - & 1000\\
DDNM\cite{wang2022zero} & 0.475 & 23.96 & 0.319 & \textbf{21.64} & 0.191 & \underline{31.16} & 0.278 & \textbf{28.06} & - & - &1000\\
DPS\cite{chung2022diffusion} & 0.354 & 23.92 & 0.309 & 19.78 & 0.326 & 24.43 & 0.360 & 21.86 & 0.357 & 21.46 & 1000\\
DAPS\cite{zhang2024improving} & \textbf{0.276} & \underline{25.89} & \underline{0.214} & {21.43} & \underline{0.135} & 28.44 & \textbf{0.253} & {26.15} & \underline{0.196} & \underline{27.86} & 1000 \\
\hline
DPS-MO(Ours) & \underline{0.285} & \textbf{26.11} & \textbf{0.195} & \underline{21.56} & \textbf{0.105} & 30.51 & 0.260 & \underline{26.27} & \textbf{0.195} & \textbf{28.84} & \textbf{100} \\
\hline
\end{tabular}
}
\caption{Quantitative evaluation on ImageNet 256$\times$256. Performance comparison of different methods on various linear tasks. The
means are computed over 100 image. We use bold font to highlight the best scores and underline to highlight the second-best scores.
}
\label{tab:imagenet_linear}
\end{table*}

\begin{table*}[htbp]
\centering
\resizebox{0.7\textwidth}{!}{
\begin{tabular}{l|cc|cc|cc|c}
\hline
\multirow{2}{*}{Method} & \multicolumn{2}{c|}{Phase retrieval} & \multicolumn{2}{c|}{Nonlinear deblurring} & \multicolumn{2}{c|}{High dynamic range} & \multirow{2}{*}{NFE} \\
& LPIPS$\downarrow$ & PSNR$\uparrow$ & LPIPS$\downarrow$ & PSNR$\uparrow$ & LPIPS$\downarrow$ & PSNR$\uparrow$ & \\
\hline
ReSample\cite{song2023solving} & 0.403  & 19.24  & \underline{0.206}  & 26.20  & {0.198}  & {25.11}  & 1000\\
DPS\cite{chung2022diffusion} & 0.447  & 16.81  & 0.306  & 22.49  & 0.503 & 19.23  & 1000\\
RED-diff\cite{mardani2023variational} & 0.536  & 14.98  & 0.211  & \textbf{30.07}  & 0.274  & 22.03  & 1000 \\
LatentDAPS\cite{zhang2024improving} & {0.361}  & {20.54}  & 0.314  & 25.34  & 0.269  & 23.64  & 1000\\
DAPS\cite{zhang2024improving} & \textbf{0.254} & \textbf{25.78}  & \textbf{0.169}& \underline{27.73} & \underline{0.175}  & \underline{26.30}  & 4000 \\
\hline
DPS-MO(Ours) & \underline{0.285} & \underline{24.40} & 0.207 & 27.55 & \textbf{0.163} & \textbf{27.39} & \{1000,100,100\}\\
\hline
\end{tabular}
}
\caption{Quantitative evaluation on ImageNet 256$\times$256. Performance comparison of different methods on various nonlinear tasks. The means are computed over 100 images. We use bold font to highlight the best scores and underline to highlight the second-best scores.
}
\label{tab:imagenet_nonlinear}
\end{table*}

\section{Experiments}

In this section, we compare our proposed method against recent SOTA methods for solving inverse problems across various tasks, including 5 linear and 3 non-linear tasks.

\subsection{Experiment Setup}

\paragraph{Datasets, Checkpoints, and Metrics.} We use pretrained diffusion models for the FFHQ dataset \cite{karras2019style} at $256 \times 256$ resolution \cite{chung2022diffusion} and for ImageNet \cite{russakovsky2015imagenet} from \cite{dhariwal2021diffusion}. To evaluate the performance of our method, as in \cite{zhang2024improving}, we use 100 images from the validation sets of both FFHQ and ImageNet. Our main evaluation metrics include the Learned Perceptual Image Patch Similarity (LPIPS) score \cite{zhang2018unreasonable} and peak signal-to-noise ratio (PSNR). The Structural Similarity Index (SSIM) score \cite{hore2010image} is also presented in some ablation studies, but it is not included in the main results table.

\paragraph{Inverse Problems.} We evaluate the performance of our method on five linear inverse tasks and three nonlinear inverse tasks. For linear problems, we consider the following:
1) Super-Resolution: Images are downscaled by a factor of 4 using a bicubic resizer.  
2) Box Inpainting: A random box of size $128 \times 128$ is masked.  
3) Random Inpainting: Each pixel has a 70\% probability of being masked.  
4) Gaussian Deblurring: A Gaussian kernel of size $61 \times 61$ with a standard deviation of 3.0 is applied.  
5) Motion Deblurring: A motion blur kernel of size $61 \times 61$ with a standard deviation of 0.5 is used.

For nonlinear tasks, we consider the following: 1) Phase Retrieval: Performed with an oversampling rate of 2.0.  
2) High Dynamic Range (HDR) Reconstruction: Applied with a factor of 2.  
3) Nonlinear Deblur: Settings are as described in \cite{tran2021explore}. Due to the inherent instability of nonlinear tasks, we adopt the same strategy as DPS \cite{chung2022diffusion} and DAPS \cite{zhang2024improving}. Specifically, we report the best result across four independent samples and calculate metrics for recovered images after rotating them back by 180 degrees if necessary, as is common in phase retrieval tasks where the forward operator discards directional information. Gaussian noise with $\sigma_{\bm{n}} = 0.05$ is added across all tasks, both linear and nonlinear. Further details and parameter settings for each task are provided in Appendix \ref{app:task}.

\paragraph{Baselines.} We compare the performance of our method with recent SOTA methods, including both pixel-wise and latent diffusion models. The pixel-based diffusion model baselines include Decoupled Annealing Posterior Sampling (DAPS) \cite{zhang2024improving}, Diffusion Posterior Sampling (DPS) \cite{chung2022diffusion}, Denoising Diffusion Restoration Models(DDRM) \cite{kawar2022denoising}, Red-diff \cite{mardani2023variational}, Denoising Diffusion Null-Space Model (DDNM) \cite{wang2022zero}, and Plug-and-Play Alternating Direction Method of Multipliers (PnP-ADMM) \cite{chan2016plug}. For latent diffusion models, we compare with Latent DAPS \cite{zhang2024improving}, Posterior Sampling with Latent Diffusion Models (PSLD) \cite{rout2024solving}, and ReSample \cite{song2023solving}. The implementation details of these baselines are provided in Appendix \ref{app:hyper} and Appendix \ref{app:dps_po}.

\subsection{Experiment results}
\label{sec:exp_result}
We present the performance of our method on the FFHQ dataset for linear and nonlinear tasks in Tables \ref{tab:ffhq_linear} and \ref{tab:ffhq_nonlinear}, respectively. For the ImageNet dataset, results for linear and nonlinear tasks are shown in Tables \ref{tab:imagenet_linear} and \ref{tab:imagenet_nonlinear}, respectively. We also list the NFEs to demonstrate the efficiency of our method, as it requires significantly fewer NFEs to achieve or match SOTA performance. Specifically, our method requires only 50 NFEs for linear tasks on the FFHQ dataset and 100 NFEs for nonlinear tasks on FFHQ, as well as for both linear and nonlinear tasks (except phase retrieval) on ImageNet, making it $10\times$ to $40\times$ faster than existing methods. Additional qualitative sample comparisons are included in Appendix \ref{app:quality} for further analysis.

\paragraph{Hyperparameters.} For hyperparameters, we set the SGLD learning rate to $5 \times 10^{-5}$ for all tasks, except for the ImageNet Phase Retrieval tasks where the learning rate is set to $5 \times 10^{-4}$. The batch size is set to 10, and the number of SGLD steps is chosen from $\{20, 50, 100, 150, 200, 500\}$ to achieve the best performance. For the sampling schedule, we use EDM \cite{karras2022elucidating} with $s(t) = 1$ and $\sigma(t) = t$. Further details on hyperparameters and sampling schedules are provided in Appendix \ref{app:hyper} and Appendix \ref{app:dps_po}, and an ablation study of these hyperparameters is discussed in Section \ref{sec:ablation}.

\subsection{Performace of Red-diff-MO}
\label{sec:red_diff_exp}

In the main tables, we report the performance of DPS-MO as described in Algorithm \ref{alg:dps_mo}. Additionally, we present the empirical results of our \textbf{MO} module integrated into the Red-diff framework, as outlined in Algorithm \ref{alg:red_diff_mo}. Although Red-diff-MO does not perform as well as DPS-MO, it still achieves a substantial performance increase in nonlinear tasks with fewer NFEs. We show the performance of Red-diff-MO on phase retrieval and HDR tasks on the FFHQ dataset in Table \ref{tab:red_diff_nonlinear}. With only 100 NFEs, Red-diff-MO not only significantly boosts performance on nonlinear tasks but also requires 10 times fewer NFEs compared to Red-diff.

\begin{table}[H]
\centering
\resizebox{\columnwidth}{!}{
\begin{tabular}{l|cc|cc|c}
\hline
\multirow{2}{*}{Method} & \multicolumn{2}{c|}{Phase retrieval} & \multicolumn{2}{c|}{High dynamic range} & \multirow{2}{*}{NFE} \\
& LPIPS$\downarrow$ & PSNR$\uparrow$ & LPIPS$\downarrow$ & PSNR$\uparrow$  &\\
\hline
RED-diff & 0.596  & 15.60 & 0.258  & 22.16  & 1000 \\
RED-diff-MO & \textbf{0.222}  & \textbf{27.34} & \textbf{0.181}  & \textbf{27.45}  & \textbf{100} \\
\hline
\end{tabular}
}
\caption{Quantitative evaluation on the FFHQ dataset at $256 \times 256$ resolution comparing Red-diff and Red-diff-MO on two nonlinear tasks demonstrates that the \MO plug-and-play (PnP) module significantly enhances the performance of Red-diff, achieving a substantial improvement. Notably, Red-diff-MO achieves this performance boost with only 100 NFEs, as opposed to the 1000 NFEs required by Red-diff.}
\label{tab:red_diff_nonlinear}
\end{table}

\subsection{Ablation Study}
\label{sec:ablation}
\paragraph{Sampling Schedule Choice.} Since our method is orthogonal to the choice of sampling schedule, we can choose arbitrary functions $s(t)$ and $\sigma(t)$. In our experiments, we adopted the EDM sampling schedule \cite{karras2022elucidating}; details of this sampler are provided in Appendix \ref{app:dps_po}. We also evaluated various sampling schedules, including VP \cite{song2023solving}, VE \cite{song2023solving}, iDDPM \cite{nichol2021improved}, and EDM \cite{karras2022elucidating}, using the Euler method to solve the reverse SDE. 

The performance results on the FFHQ phase retrieval task are shown in Figure \ref{fig:sampler_compare}, with all hyperparameters held constant across different sampling schedules. We observed that the EDM schedule achieved the highest PSNR and the lowest LPIPS, while also avoiding overfitting in the final stages. Consequently, we selected the EDM schedule for all tasks in our experiments.

\begin{figure}
    \centering
    \includegraphics[width=\linewidth]{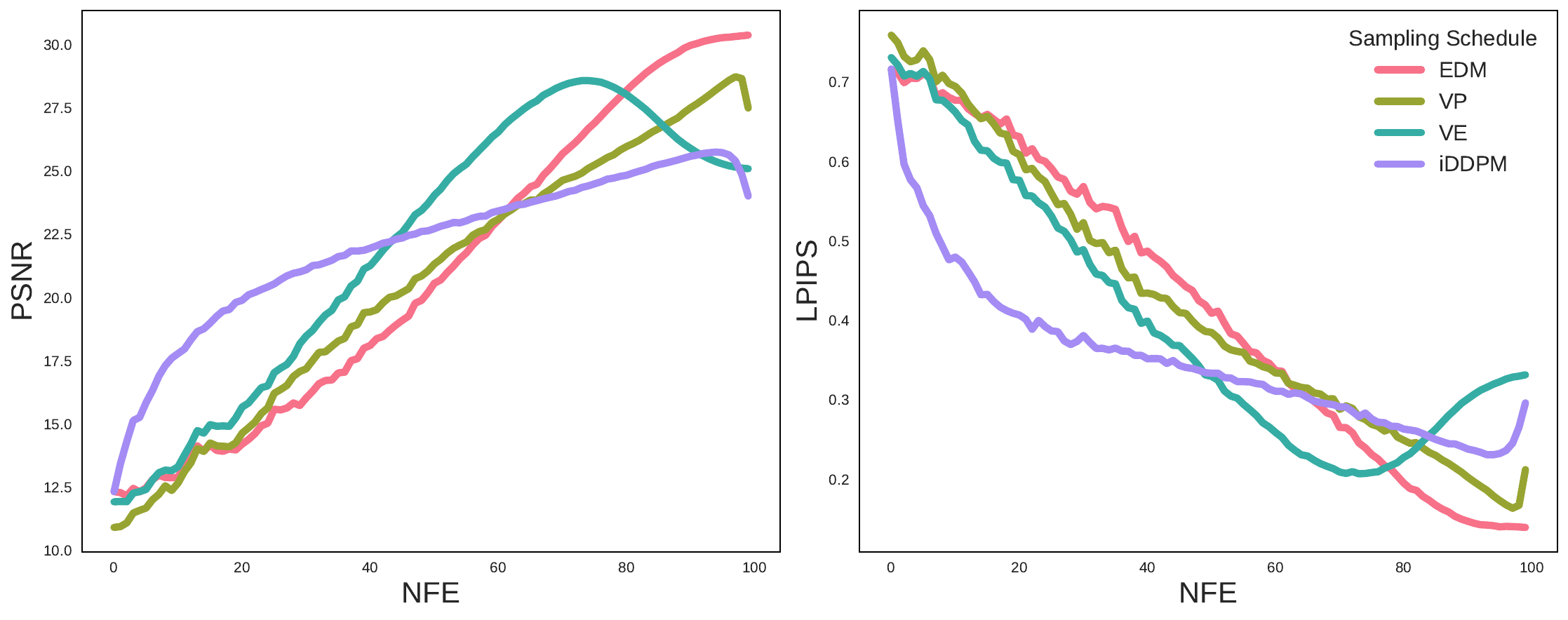}
    \caption{Comparison of different sampling schedules on the FFHQ phase retrieval task, with all other hyperparameters fixed.}
    \label{fig:sampler_compare}
\end{figure}
\paragraph{MO Optimizer Choice.}

In addition to SGLD, other optimizers such as the Adam optimizer \cite{kingma2014adam} can be used to solve the optimization problem $\|\bm{y} - \mA(\bm{x}_0)\|_2^2$. We compared the performance of SGLD and Adam on the FFHQ Box Inpainting task and the Phase Retrieval task, keeping all hyperparameters constant except for the learning rate (details provided in Appendix \ref{app:hyper}). The results are shown in Table~\ref{tab:adam} provided below. 

\begin{table}[H]
\centering
\resizebox{0.7\columnwidth}{!}{
\begin{tabular}{l|cc|cc}
\hline
\multirow{2}{*}{Method} & \multicolumn{2}{c|}{Inpainting (Box)} & \multicolumn{2}{c}{Phase Retrieval}  \\
& LPIPS$\downarrow$ & PSNR$\uparrow$ & LPIPS$\downarrow$ & PSNR$\uparrow$  \\
\hline
SGLD & 0.113  & 24.48 & 0.133  & 30.33   \\
Adam & 0.630  & 10.58  & 0.430  & 23.92 \\
\hline
\end{tabular}
}
\caption{Quantitative evaluation on the FFHQ dataset at $256 \times 256$ resolution comparing SGLD and Adam on Box Inpainting and Phase Retrieval tasks.}
\label{tab:adam}
\end{table}

From the results, we observe that Adam also improves DPS performance in Phase Retrieval tasks, increasing PSNR from 17.64 to 23.92. However, Adam performs poorly on the Box Inpainting task, while SGLD performs well. This difference is likely due to SGLD’s addition of Gaussian noise to the masked region, which is better accommodated by the diffusion model. The extra Gaussian noise from SGLD introduces more randomness, providing our method with a greater chance of correcting errors or generating realistic content. Additionally, the added randomness leads to greater diversity in generation, as shown in Figure \ref{fig:diverse}, where four independent runs produce varied face images.

\begin{figure}
    \centering
    \includegraphics[width=\linewidth]{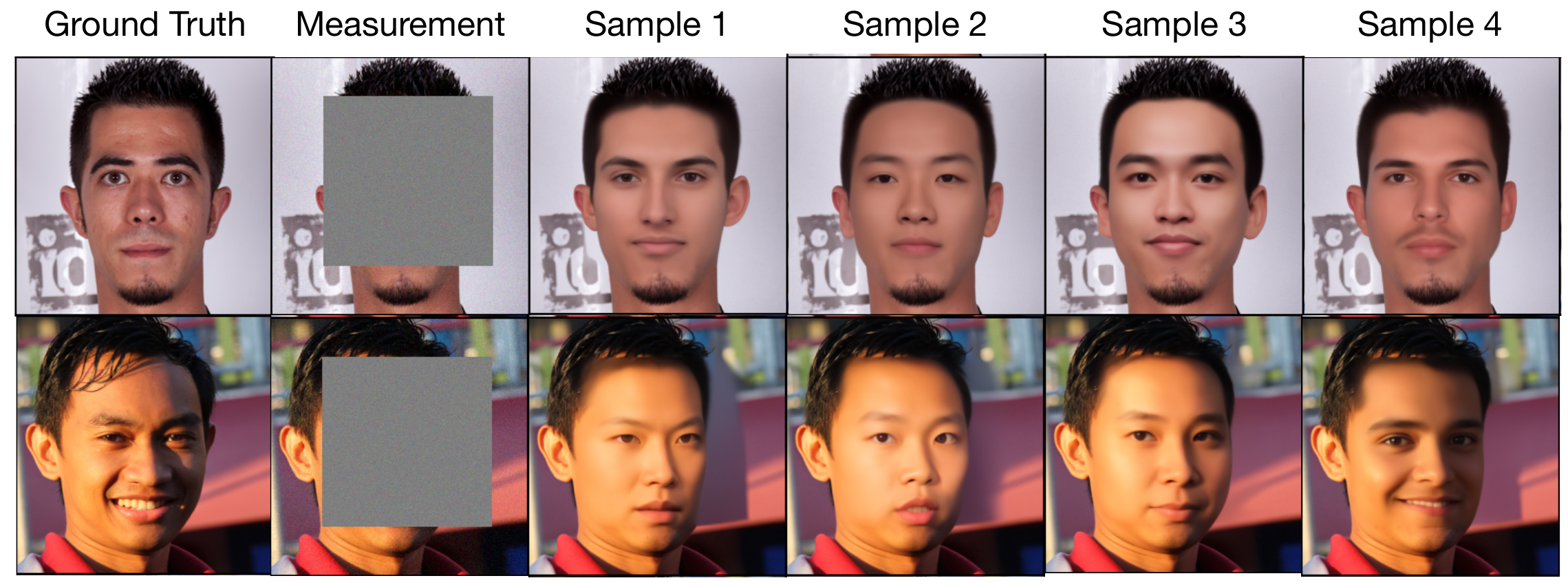}
    \caption{Inpainting task with 170 $\times$ 170 box. Four independent runs are able to genrate different faces and provide diversity.}
    \label{fig:diverse}
\end{figure}

\paragraph{Diffusion NFE.}

We also evaluated performance relative to NFEs, selecting one linear task (Box Inpainting) and one nonlinear task (HDR), and compared our method, DPS-MO, with DPS and Red-diff. We observe that DPS-MO not only achieves a high upper bound in PSNR and a low bound in LPIPS but also reaches strong performance at an early stage, requiring only 50 to 100 NFEs.

\begin{figure}[H]
    \centering
    \includegraphics[width=\linewidth]{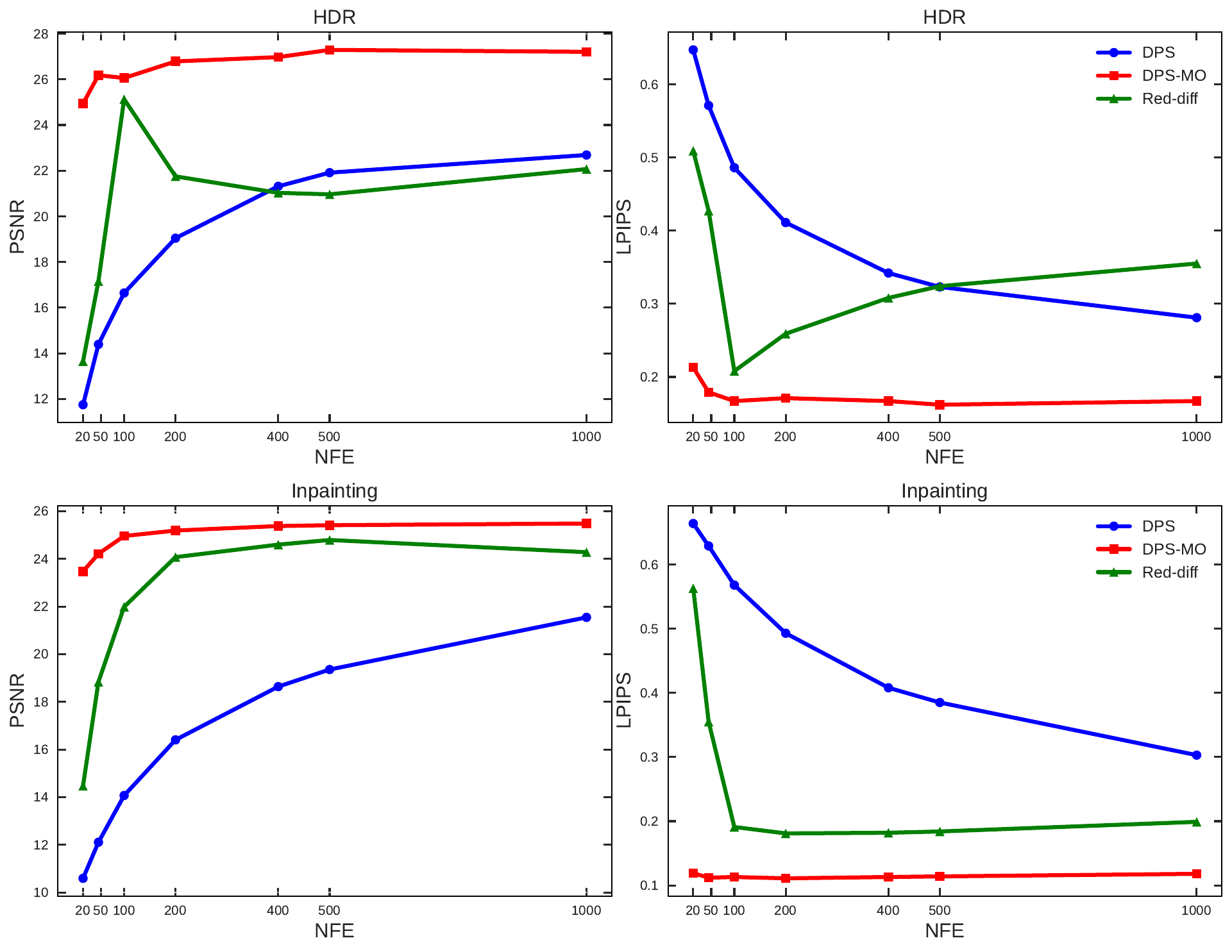}
    \caption{Performance comparison with respect to NFEs, showing that DPS-MO achieves high performance at an early stage, requiring fewer NFEs than DPS and Red-diff.}
    \label{fig:nfe_comparison}
\end{figure}

We also provide a visualization of the internal process of our DPS-MO, DPS, and Red-diff methods in Figure \ref{fig:inpainting_process}. To enable a direct comparison across methods, we use the EDM schedule for all three methods, ensuring they operate at the same variance levels. Our DPS-MO uses 100 NFEs, while DPS and Red-diff each use 1000 NFEs. 

As shown in the figure, in the early stages when $\sigma = 80$, DPS-MO has already produced human-face-like images. Additionally, $\bm{x}_t^{\SGLD}$ effectively aligns the images with the measurements, resulting in a higher PSNR for our method.

\begin{figure}
    \centering
    \includegraphics[width=\linewidth]{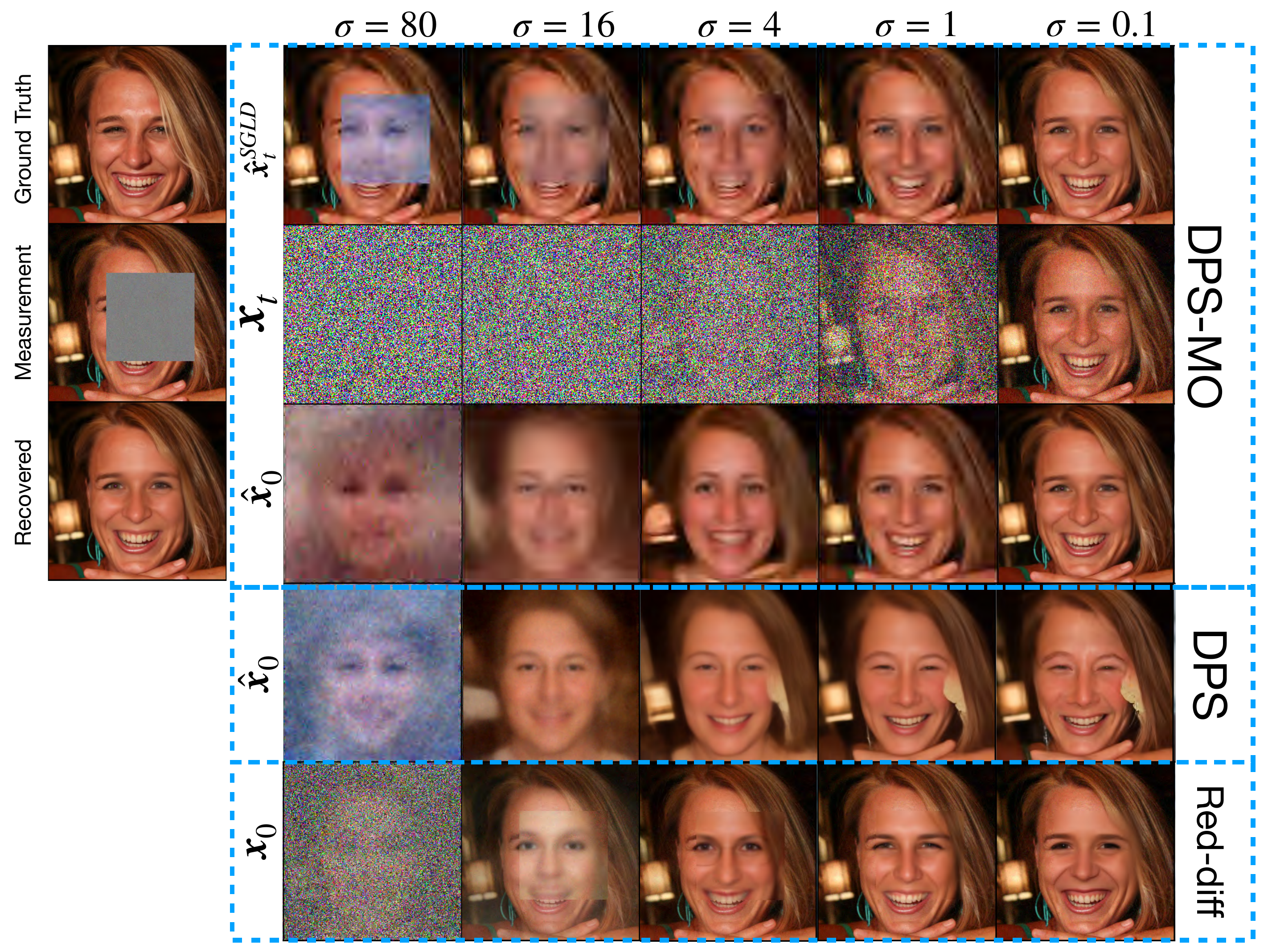}
    \caption{Visualization and comparison of the intermediate stages of DPS-MO, DPS, and Red-diff. DPS-MO uses 100 NFEs, while DPS and Red-diff use 1000 NFEs.}
    \label{fig:inpainting_process}
    \vspace{-5mm}
\end{figure}

\paragraph{Stochastic Gradient Langevin Dynamic Steps.} 

We also evaluated the impact of the number of SGLD steps per diffusion time step on the final performance. Using the Box Inpainting task on the FFHQ dataset, we measured LPIPS and SSIM with respect to the number of SGLD steps in Figure \ref{fig:sgld-comparison}. Since the PSNR for Box Inpainting consistently ranges between 24-25, it is not included in the figure.

We observe that both under-optimization and over-optimization in the number of SGLD steps negatively affect LPIPS and SSIM metrics, with under-optimization (insufficient steps) causing a more significant performance drop. In Appendix \ref{app:hyper}, we provide the SGLD step hyperparameters for all tasks, determined through grid search.

\begin{figure}[H]
    \centering
    \includegraphics[width=0.5\linewidth]{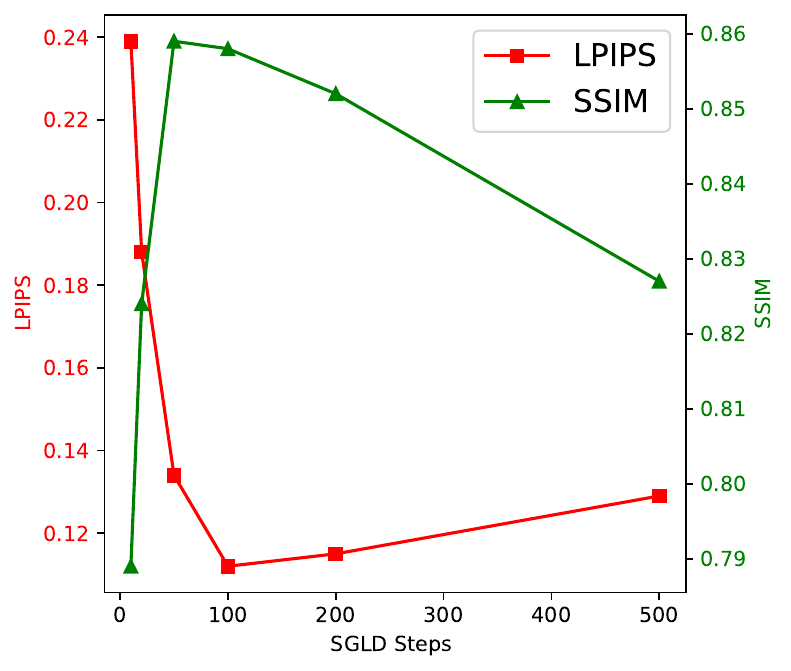}
    \caption{LPIPS and SSIM metrics on FFHQ Box Inpainting tasks with respect to the number of SGLD steps per time step.}
    \label{fig:sgld-comparison}
\end{figure}

\paragraph{Memory Requirement and Time Efficiency.}

To demonstrate the memory efficiency of our algorithm, we use the command \texttt{nvidia-smi} to monitor memory consumption while solving an inverse problem, in comparison to our baselines. We present the memory usage for the Inpainting task on the FFHQ dataset in Table \ref{tab:memory}. Although DPS-MO requires more optimization steps, it does not track gradients through the diffusion model $D_\theta$, allowing our method to use less memory than DPS and to achieve memory usage comparable to Red-diff.

For wall time, we also report the time taken to perform inpainting tasks, using 50 NFEs and 100 SGLD steps, as configured for the main table results described in Section \ref{sec:exp_result}. Although DPS-MO involves more optimization steps, its optimization is performed directly on the image and does not track through $D_\theta$, resulting in only a minimal increase in time per optimization step. The reduction in diffusion NFEs significantly offsets any time increase, allowing DPS-MO to generate an image in just 5.3 seconds, whereas DPS and Red-diff require over 60 seconds. Time results are also presented in Table \ref{tab:memory}. All experiments, including memory usage and wall time measurements, were conducted on a server with $8\times$ A5000 GPUs and an AMD EPYC 7513 32-Core Processor, with each experiment running on a single GPU.

\begin{table}[h!]
\centering
\resizebox{\columnwidth}{!}{%
\begin{tabular}{|c|c|c|c|c|c|}
\hline
\textbf{Model} & \textbf{Algorithm} & \textbf{Model Only} & \makecell{\textbf{Memory Increment} \\ \textbf{(Batch size=10)}} & \textbf{Total} & \textbf{Seconds/Image}\\ \hline
\multirow{3}{*}{DDPM} & DPS-MO & \multirow{3}{*}{670 MB} & + 5610 MB  & 6280 MB & 5.3\\ 
 & DPS &   & +21620 MB & 22290 MB & 84 \\ 
 & Red-diff &  & +5582 MB & 6252 MB & 60\\
 \hline
\end{tabular}%
}
\caption{Memory and time usage for different methods in solving inverse problems. Since DPS-MO does not track gradients through the diffusion model, it requires as little memory as Red-diff and achieves more than a $10\times$ speedup in wall time.}
\label{tab:memory}
\end{table}

\section{Conclusion}

We propose the Measurement Optimization module, which offers a powerful and efficient approach to solve inverse problems with diffusion models. By iteratively combining SGLD with querying the diffusion prior, \textbf{MO} effectively incorporates measurement information while ensuring solutions remain on the valid data manifold. We integrated \textbf{MO} into both DPS and Red-diff, producing new algorithms termed DPS-MO and Red-diff-MO. Our comprehensive evaluations demonstrate that DPS-MO establishes new SOTA results across various tasks, requiring only 50-100 NFEs for most tasks. Although Red-diff-MO does not achieve SOTA, the \textbf{MO} module significantly boosts Red-diff's performance on nonlinear tasks, with only 100 NFEs. Our method not only minimizes memory usage but also reduces wall time for solving inverse problems.

One potential limitation of \textbf{MO} is that if the forward operator $\mA$ is non-differentiable, such as in the case of a black-box operator, or if $\mA$ requires a long computation time for each gradient step, our method could become time-consuming. Addressing how to handle non-differentiable, black-box $\mA$ will be an interesting direction for future investigation.


{
    \small
    \bibliographystyle{ieeenat_fullname}
    \bibliography{reference}
}

\clearpage
\setcounter{page}{1}
\maketitlesupplementary

\section{Complete Proof of Theorem \ref{thm:edm}}
\label{app:proof}

The proof is adopted from Section B.3 of EDM \cite{karras2022elucidating}. We include the proof here for the completeness.

\begin{proof}[Proof of Theorem \ref{thm:edm}]
    Assume the mean-predicted diffusion model is trained on a finite of sample $\{\bm z_1,\bm z_2,\dots,\bm z_n\}$. Then we assume the distribution of this dataset is a mixture of Dirac delta distribution,
    \begin{align*}
        p_{\text{data}}(\bm x) = \frac{1}{n}\sum_{i=1}^n\delta(\bm x-\bm z_i)
    \end{align*}
Then consider the the marginal distribution $p(\bm x;\sigma)$ after adding Gaussian noise with standard deviation $\sigma$, we can have such marginal distribution:
which allows us to also express $p(\bm x; \sigma)$

\begin{align*}
    p(\bm x; \sigma) &= p_{\text{data}} * \mathcal{N}(0, \sigma(t)^2 \mathbf{I}) \\
&= \int_{\mathbb{R}^d} p_{\text{data}}(\bm x_0) \mathcal{N}(\bm x; \bm x_0, \sigma^2 \mathbf{I}) \dd\bm x_0 \\
&= \int_{\mathbb{R}^d} \left[\frac{1}{n} \sum_{i=1}^n \delta(\bm x_0 - \bm z_i)\right] \mathcal{N}(\bm x; \bm x_0, \sigma^2 \mathbf{I}) \dd\bm x_0 \\
&= \frac{1}{n} \sum_{i=1}^n \int_{\mathbb{R}^d} \mathcal{N}(\bm x; \bm x_0, \sigma^2 \mathbf{I}) \delta(\bm x_0 - \bm z_i) \,dx_0 \\
&= \frac{1}{n} \sum_{i=1}^n \mathcal{N}(\bm x; \bm z_i, \sigma^2 \mathbf{I}).
\end{align*}

Let us now consider the denoising score matching loss. By expanding the expectations, we can rewrite the formula as an integral over the noisy samples

\begin{align*}
    \mathcal{L}(D;\sigma) &= \mathbb{E}_{\bm z\sim p_{\text{data}}} \mathbb{E}_{\bm\epsilon\sim\mathcal{N}(0,\sigma^2\mathbf{I})} \|D(\bm z + \bm \epsilon; \sigma) - \bm z\|_2^2 \\
&= \mathbb{E}_{\bm z\sim p_{\text{data}}} \mathbb{E}_{\bm x\sim\mathcal{N}(\bm z,\sigma^2\mathbf{I})} \|D(\bm x; \sigma) - \bm z\|_2^2 \\
&= \mathbb{E}_{\bm z\sim p_{\text{data}}} \int_{\mathbb{R}^d} \mathcal{N}(\bm x; \bm z, \sigma^2 \mathbf{I}) \|D(\bm x; \sigma) - \bm z\|_2^2 \dd \bm x \\
&= \frac{1}{n} \sum_{i=1}^n \int_{\mathbb{R}^d} \mathcal{N}(\bm x; \bm z_i, \sigma^2 \mathbf{I}) \|D(\bm x; \sigma) - \bm z_i\|_2^2 \dd \bm x \\
&= \int_{\mathbb{R}^d} \underbrace{\frac{1}{n} \sum_{i=1}^n \mathcal{N}(\bm x; \bm z_i, \sigma^2 \mathbf{I}) \|D(\bm x; \sigma) - \bm z_i\|_2^2 \dd \bm x}_{=: \mathcal{L}(D;\bm x,\sigma)}
\end{align*}

Then we can minimize $\mathcal{L}(D; \sigma)$ by minimizing $\mathcal{L}(D; \bm x, \sigma)$ independently for each $x$:

$$ D(\bm x; \sigma) = \arg\min_{D(\bm x;\sigma)} \mathcal{L}(D; \bm x, \sigma). $$

This is a convex optimization problem; its solution is uniquely identified by setting the gradient w.r.t.
$D(\bm x; \sigma)$ to zero:

\begin{align*}
\mathbf{0} &= \nabla_{D(\bm x;\sigma)}[\mathcal{L}(D; \bm x, \sigma)] \\
\mathbf{0} &= \nabla_{D(\bm x;\sigma)}\left[\frac{1}{n} \sum_{i=1}^n \mathcal{N}(\bm x; \bm z_i, \sigma^2 \mathbf{I}) \|D(\bm x; \sigma) - \bm z_i\|_2^2\right] \\
\mathbf{0} &= \sum_{i=1}^n \mathcal{N}(\bm x; \bm z_i, \sigma^2 \mathbf{I}) \nabla_{D(\bm x;\sigma)}\left[\|D(\bm x; \sigma) - \bm z_i\|_2^2\right] \\
\mathbf{0} &= \sum_{i=1}^n \mathcal{N}(\bm x; \bm z_i, \sigma^2 \mathbf{I}) \left[2 D(\bm x; \sigma) - 2 \bm z_i\right] \\
\mathbf{0} &= \left[\sum_{i=1}^n \mathcal{N}(\bm x; \bm z_i, \sigma^2 \mathbf{I})\right] D(\bm x; \sigma) - \sum_{i=1}^n \mathcal{N}(\bm x; \bm z_i, \sigma^2 \mathbf{I}) \bm z_i \\
& D(\bm x; \sigma) = \frac{\sum_{i=1}^n \mathcal{N}(\bm x; \bm z_i, \sigma^2 \mathbf{I}) \bm z_i}{\sum_{i=1}^n \mathcal{N}(\bm x; \bm z_i, \sigma^2 \mathbf{I})}.
\end{align*}

which gives a closed-form solution for the ideal denoiser $D(\bm x; \sigma)$.

\end{proof}

\section{Experiment Details}

\subsection{Task Configuration}
\label{app:task}

We use the exactly the same task configuration as in \cite{zhang2024improving} where you can find in Section D.1 in \cite{zhang2024improving}.

\subsection{Baseline Implementation}
\label{app:baseline}

The baselines results are reported by \cite{zhang2024improving}. Since we using the same configuration and dataset as \cite{zhang2024improving} for each task, we report the performance of baseline methods directly from \cite{zhang2024improving}.

\begin{table*}[t!]
\centering
\resizebox{\textwidth}{!}{ 
\begin{tabular}{l|c|c|c|c|c|c|c|c}
\hline
\multicolumn{9}{c}{FFHQ} \\ 
\hline
 & SR & Inpainting (Box) & Inpainting (Random) & Gaussian deblurring & Motion deblurring & Phase retrieval & Nonlinear deblurring & High dynamic range \\
\hline
NFE & 50 & 50 & 50 & 50 & 50 & 100 & 100 & 100 \\
\hline
$\sigma_{\text{max}}$ & 80 & 80 & 80 & 80 & 80 & 80 & 80 & 80 \\
\hline
$\sigma_{\text{min}}$ & 0.01 & 0.05 & 0.05 & 0.002 & 0.02 & 0.05 & 0.05 & 0.02 \\
\hline
$N_{\SGLD}$ & 150 & 100 & 150 & 50 & 100 & 500 & 200 & 500 \\
\hline
\multicolumn{9}{c}{ImageNet}\\
\hline
 & SR & Inpainting (Box) & Inpainting (Random) & Gaussian deblurring & Motion deblurring & Phase retrieval & Nonlinear deblurring & High dynamic range \\
\hline
NFE & 100 & 100 & 100 & 100 & 100 & 1000 & 100 & 100 \\
\hline
$\sigma_{\text{max}}$  & 1 & 80 & 1 & 80 & 80 & 80 & 80 & 80 \\
\hline
$\sigma_{\text{min}}$ & 0.02 & 0.02 & 0.02 & 0.02 & 0.02 & 0.05 & 0.05 & 0.02 \\
\hline
$N_{\SGLD}$ & 100 & 50 & 50 & 200 & 100 & 50 & 200 & 50 \\
\hline
\end{tabular}
}
\caption{Hyperparameters for different tasks across various datasets.}
\label{tab:hyper}
\end{table*}

\begin{table*}[htbp]
\centering
\resizebox{0.9\textwidth}{!}{
\begin{tabular}{l|cc|cc|cc|cc|cc}
\hline
\multirow{2}{*}{Method} & \multicolumn{2}{c|}{SR ($\times$4)} & \multicolumn{2}{c|}{Inpainting (Box)} & \multicolumn{2}{c|}{Inpainting (Random)} & \multicolumn{2}{c|}{Gaussian Deblurring} & \multicolumn{2}{c}{Motion Deblurring} \\
& LPIPS$\downarrow$ & PSNR$\uparrow$ & LPIPS$\downarrow$ & PSNR$\uparrow$ & LPIPS$\downarrow$ & PSNR$\uparrow$ & LPIPS$\downarrow$ & PSNR$\uparrow$ & LPIPS$\downarrow$ & PSNR$\uparrow$  \\
\hline
Different SGLD solutions & \textbf{0.184} & \textbf{29.18} & \textbf{0.113} & \textbf{24.48} & \textbf{0.110} & \textbf{31.38} & \textbf{0.199} & \textbf{28.25} & \textbf{0.133} & \textbf{31.24}  \\
Same SGLD solution & 0.738 & 10.94 & 0.507 & 14.72 & 0.724 & 12.01 & 0.715  & 12.86 & 0.632 & 15.33 \\
\hline
\end{tabular}
}
\caption{Empirical results comparing different SGLD solutions with different initializations and the same SGLD solution. Clearly, using different SGLD solutions with different initializations achieves significantly better results.}
\label{tab:sgld_init}
\end{table*}

\begin{table*}[htbp]
\centering
\resizebox{0.9\textwidth}{!}{
\begin{tabular}{l|cc|cc|cc|cc|cc|c}
\hline
\multirow{2}{*}{Method} & \multicolumn{2}{c|}{SR ($\times$4)} & \multicolumn{2}{c|}{Inpaint (Box)} & \multicolumn{2}{c|}{Inpaint (Random)} & \multicolumn{2}{c|}{Gaussian deblurring} & \multicolumn{2}{c|}{Motion deblurring} & \multirow{2}{*}{NFE} \\
& LPIPS$\downarrow$ & PSNR$\uparrow$ & LPIPS$\downarrow$ & PSNR$\uparrow$ & LPIPS$\downarrow$ & PSNR$\uparrow$ & LPIPS$\downarrow$ & PSNR$\uparrow$ & LPIPS$\downarrow$ & PSNR$\uparrow$ & \\
\hline
DPS-MO & \textbf{0.184} & {29.18} & \textbf{0.113} & {24.48} & {0.110} & {31.38} & {0.199} & {28.25} & \textbf{0.133} & {31.24} & {50} \\
DPS-MO & 0.193 & \textbf{29.26} & 0.119 & \textbf{25.22} & \textbf{0.098} & \textbf{32.79} & \textbf{0.193} & \textbf{28.97} & 0.143 & \textbf{31.58} & 1000 \\
\hline
\end{tabular}
}
\caption{Quantitative evaluation on FFHQ 256$\times$256. Performance comparison of 100 and 100 NFEs of our methods on various linear tasks. The value shows the mean over 100 images. We use bold font to highlight the best scores.}
\label{tab:ffhq_linear_1k}
\end{table*}

\begin{table*}[htbp]
\centering
\resizebox{0.65\textwidth}{!}{
\begin{tabular}{l|cc|cc|cc|c}
\hline
\multirow{2}{*}{Method} & \multicolumn{2}{c|}{Phase retrieval} & \multicolumn{2}{c|}{Nonlinear deblurring } & \multicolumn{2}{c|}{High dynamic range} & \multirow{2}{*}{NFE} \\
& LPIPS$\downarrow$ & PSNR$\uparrow$ & LPIPS$\downarrow$ & PSNR$\uparrow$ & LPIPS$\downarrow$ & PSNR$\uparrow$ &\\
\hline
DPS-MO & {0.133} & {30.33} & \textbf{0.161} & {29.35} & {0.139} & {28.71} & {100} \\ 
DPS-MO & \textbf{0.117} &  \textbf{31.45} & 0.180 & \textbf{29.75} & \textbf{0.135} & \textbf{28.81} & 1000 \\
\hline
\end{tabular}
}
\caption{Quantitative evaluation on FFHQ 256$\times$256. Performance comparison of 100 and 100 NFEs of our methods on various nonlinear tasks. The value shows the mean over 100 images. We use bold font to highlight the best scores.}
\label{tab:ffhq_nonlinear_1k}
\end{table*}

\begin{table*}[htbp]
\centering
\resizebox{0.9\textwidth}{!}{
\begin{tabular}{l|cc|cc|cc|cc|cc|c}
\hline
\multirow{2}{*}{Method} & \multicolumn{2}{c|}{SR ($\times$4)} & \multicolumn{2}{c|}{Inpaint (Box)} & \multicolumn{2}{c|}{Inpaint (Random)} & \multicolumn{2}{c|}{Gaussian deblurring} & \multicolumn{2}{c|}{Motion deblurring} & \multirow{2}{*}{NFE} \\
& LPIPS$\downarrow$ & PSNR$\uparrow$ & LPIPS$\downarrow$ & PSNR$\uparrow$ & LPIPS$\downarrow$ & PSNR$\uparrow$ & LPIPS$\downarrow$ & PSNR$\uparrow$ & LPIPS$\downarrow$ & PSNR$\uparrow$ & \\
\hline
DPS-MO & \textbf{0.285} & \textbf{26.11} & {0.195} & {21.56} & \textbf{0.105} & \textbf{30.51} & \textbf{0.260} & \textbf{26.27} & \textbf{0.195} & \textbf{28.84} & 100 \\
DPS-MO & 0.304 & 25.56 & \textbf{0.189} & \textbf{21.80} & 0.119 & 30.40 & 0.267 & 26.10 & \textbf{0.195} & 28.83 & {1000} \\
\hline
\end{tabular}
}
\caption{Quantitative evaluation on ImageNet 256$\times$256. Performance comparison of 100 and 100 NFEs of our methods on various linear tasks. The value shows the mean over 100 images. We use bold font to highlight the best scores.}
\label{tab:imagenet_linear_1k}
\end{table*}

\begin{table*}[htbp]
\centering
\resizebox{0.7\textwidth}{!}{
\begin{tabular}{l|cc|cc|cc|c}
\hline
\multirow{2}{*}{Method} & \multicolumn{2}{c|}{Phase retrieval} & \multicolumn{2}{c|}{Nonlinear deblurring} & \multicolumn{2}{c|}{High dynamic range} & \multirow{2}{*}{NFE} \\
& LPIPS$\downarrow$ & PSNR$\uparrow$ & LPIPS$\downarrow$ & PSNR$\uparrow$ & LPIPS$\downarrow$ & PSNR$\uparrow$ & \\
\hline
DPS-MO & {0.285} & {24.40} & 0.207 & 27.55 & \textbf{0.163} & \textbf{27.39} & \{1000,100,100\}\\
DPS-MO & \textbf{0.258} & \textbf{25.49} & \textbf{0.188} & \textbf{28.67} & 0.168 & 27.30 & \{4000,1000,1000\}\\
\hline
\end{tabular}
}
\caption{Quantitative evaluation on ImageNet 256$\times$256. Performance comparison of 100 and 100 NFEs of our methods on various nonlinear tasks. The value shows the mean over 100 images. We use bold font to highlight the best scores.}
\label{tab:imagenet_nonlinear_1k}
\end{table*}

\subsection{Hyperparameters}
\label{app:hyper}

In this section, we present the hyperparameters used to achieve the results in the main tables: Table \ref{tab:ffhq_linear}, Table \ref{tab:ffhq_nonlinear}, Table \ref{tab:imagenet_linear}, and Table \ref{tab:imagenet_nonlinear}. The hyperparameters are divided into two groups: one for the diffusion sampling schedule and the other for SGLD.

For the diffusion sampling schedule, we use the EDM schedule. More details can be found in Appendix \ref{app:dps_po}. For the SGLD hyperparameters, the learning rate $\eta$ is set to $5 \times 10^{-5}$ for all tasks and datasets, with one exception: the ImageNet Phase Retrieval task, where $\eta$ is set to $5 \times 10^{-4}$ to help SGLD escape local optima more effectively.

Additional hyperparameters are shown in Table \ref{tab:hyper}, where the NFE is chosen to balance performance and efficiency, achieving the lowest possible NFE while maintaining SOTA performance. The parameters $\sigma_\text{max}$ and $\sigma_\text{min}$ influence the sampling process, which is detailed in Appendix \ref{app:dps_po}. The value of $\sigma_\text{max}$ is either 80 (the default value in the EDM sampling framework) or 1. The value of 1 is used only for two tasks, where it is tuned to achieve SOTA performance. Empirically, setting $\sigma_\text{max} = 80$ for these tasks does not significantly impact the results, although it performs slightly below SOTA. The value of $\sigma_\text{min}$ is chosen from $\{0.002, 0.02, 0.05\}$ to avoid overfitting, and $N$ (the number of SGLD steps) is chosen from $\{50, 100, 150, 200, 500\}$.

\subsection{DPS-MO Implementation Details}
\label{app:dps_po}

In this section, we introduce the EDM schedule, where $s(t) = 1\ \forall t$ and $\sigma(t) = t$, with $t$ discretized as follows:

\begin{align*}
    t_i = \sigma_{\text{max}}^{\frac{1}{\rho}} + \frac{i}{N_{\text{diffusion}}-1} \left(\sigma_{\text{min}}^{\frac{1}{\rho}} - \sigma_{\text{max}}^{\frac{1}{\rho}}\right)^\rho,
\end{align*}

where $\rho = 7$. For other schedules such as VE, VP, and iDDPM, we do not discuss them here as they are not our main focus. For more details, please refer to Table 1 of EDM \cite{karras2022elucidating}.

For SGLD, the learning rate $\eta$ depends on the outer loop index $i$, similar to the PC algorithm in \cite{song2020score}. This dependency, denoted as $\eta_i$, is designed using the following schedule, inspired by \cite{zhang2024improving}:

\begin{align*}
    \eta_i = \eta \cdot \left(1 + \frac{N_{\text{diffusion}} - i}{N_{\text{diffusion}}} \cdot (r^{\frac{1}{p}} - 1)\right)^p,
\end{align*}

where $p = 2$, $\eta$ is discussed in Appendix \ref{app:hyper}, and $r = 0.01$.

When using SGLD, we actually solve the following iteration:

\begin{align*}
    \bm{x} \gets \bm{x} + \eta_i \cdot \nabla_{\bm{x}} \frac{\|\bm{y} - \mA(\bm{x})\|_2^2}{2\tau^2} + \sqrt{2\eta_i} \, \epsilon,
\end{align*}

where $\tau = 0.01$, since the exact value of $\sigma_{\bm{n}}$ is unknown, and $\tau$ is chosen to have the same magnitude as $\sigma_{\bm{n}}$. This iteration can be regarded as a posterior sampling method, aiming to sample $\bm{x} \sim p(\bm{x}|\bm{y}) \propto p(\bm{y}|\bm{x}) p_{\SGLD}(\bm{x})$, where $p_{\SGLD}(\bm{x})$ is a non-informative prior with uniform probability density. It can be rewritten as:

\begin{align*}
    \bm{x} \gets \bm{x} + \eta_i \cdot \nabla_{\bm{x}} \log p(\bm{x}|\bm{y}) + \sqrt{2\eta_i} \, \epsilon.
\end{align*}

\section{SGLD Initialization}
\label{app:sgld}

As described in Algorithm \ref{alg:dps_mo} and Algorithm \ref{alg:red_diff_mo}, for each \MO, we initialize with $\hat{\bm{x}}_0$ from the previous diffusion time step. A natural question arises: what if we solve $\|\bm{y} - \mA(\bm{x}_0)\|_2^2$ only once and reuse the solution to query the diffusion prior, including the noise addition and denoising process, for all time steps? In Section \ref{sec:ablation}, we discuss the intuition behind solving $\|\bm{y} - \mA(\bm{x}_0)\|_2^2$ with different initializations. Here, we provide empirical results to validate this intuition.

In this experiment, we solve $\|\bm{y} - \mA(\bm{x}_0)\|_2^2$ using an initialization of $D_\theta\left(\frac{\bm{x}_{t_N}}{s(t_N)}; \sigma(t_N)\right)$. For each diffusion time step, we skip the SGLD solving process but retain the diffusion prior query steps, specifically the noise addition and denoising in Lines 9-10 of Algorithm \ref{alg:mo}. All other processes and hyperparameters remain unchanged.

We test this on five linear tasks on the FFHQ dataset, and the results are shown in Table \ref{tab:sgld_init}. The findings clearly demonstrate that using different SGLD solutions with different initializations for each diffusion time step achieves significantly better performance.

\section{Additional Results}

We also include the empirical results for our method under 1000 and 4000 NFEs, as shown in Tables \ref{tab:ffhq_linear_1k}, \ref{tab:ffhq_nonlinear_1k}, \ref{tab:imagenet_linear_1k}, and \ref{tab:imagenet_nonlinear_1k}. These results indicate that increasing NFEs provides only marginal benefits for PSNR and LPIPS, and may even lead to overfitting. This suggests that our method saturates at an early stage for most tasks.

\section{More Qualitative Samples}
\label{app:quality}
Here, we present qualitative samples from the FFHQ and ImageNet datasets across various tasks.

\begin{figure*}
    \centering
    \includegraphics[width=\textwidth,page=1]{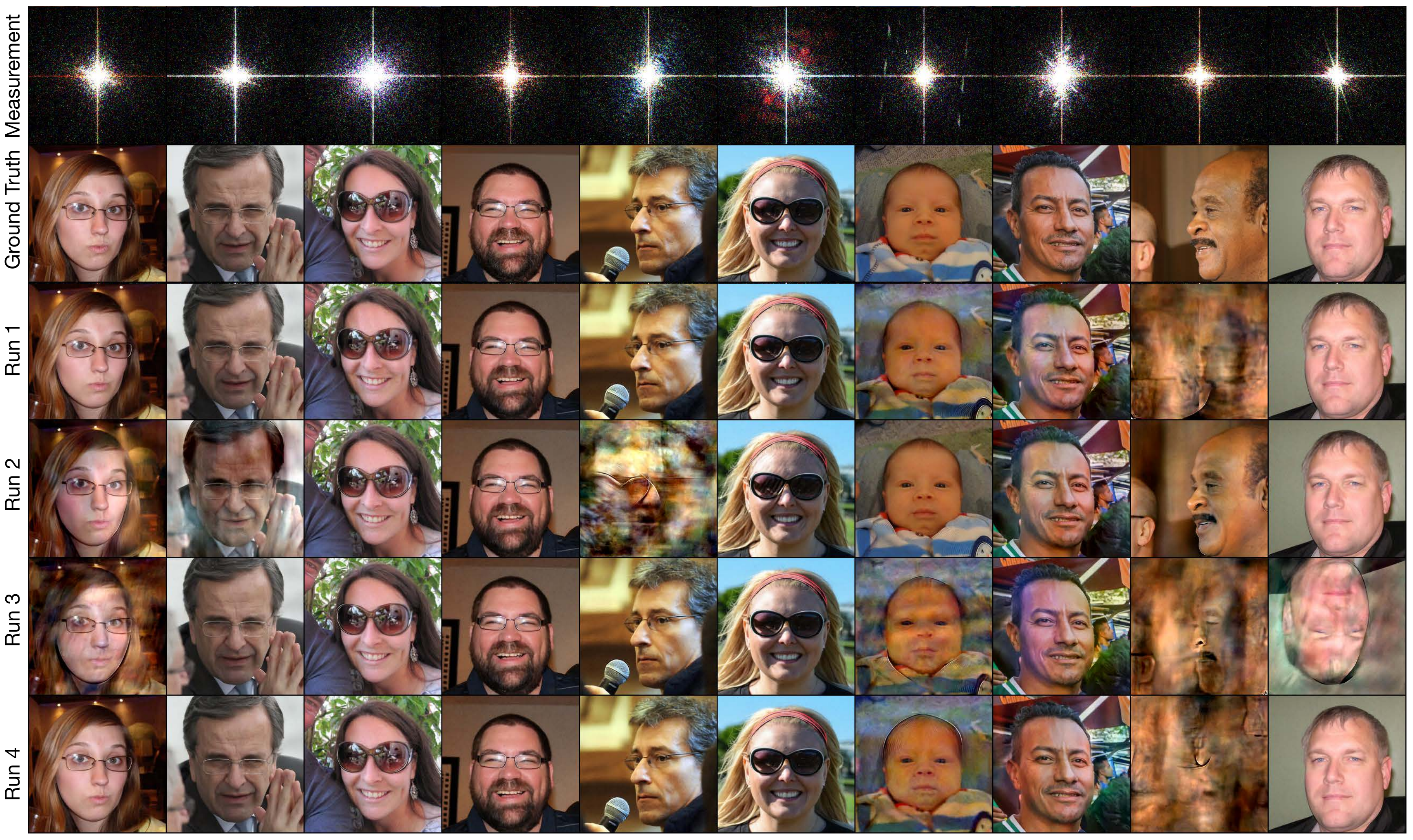}
    \caption{Phase Retrieval}
\end{figure*}
\begin{figure*}
    \centering
    \includegraphics[width=\textwidth,page=2]{fig/quality_rearrange.pdf}
    \caption{Phase Retrieval}
\end{figure*}
\begin{figure*}
    \centering
    \includegraphics[width=\textwidth,page=3]{fig/quality_rearrange.pdf}
    \caption{Inpainting (Random)}
\end{figure*}
\begin{figure*}
    \centering
    \includegraphics[width=\textwidth,page=4]{fig/quality_rearrange.pdf}
    \caption{Inpainting (Random)}
\end{figure*}
\begin{figure*}
    \centering
    \includegraphics[width=\textwidth,page=5]{fig/quality_rearrange.pdf}
    \caption{Inpainting (Box)}
\end{figure*}
\begin{figure*}
    \centering
    \includegraphics[width=\textwidth,page=6]{fig/quality_rearrange.pdf}
    \caption{Inpainting (Box)}
\end{figure*}
\begin{figure*}
    \centering
    \includegraphics[width=\textwidth,page=8]{fig/quality_rearrange.pdf}
    \caption{High Dynamic Range}
\end{figure*}
\begin{figure*}
    \centering
    \includegraphics[width=\textwidth,page=7]{fig/quality_rearrange.pdf}
    \caption{High Dynamic Range}
\end{figure*}
\begin{figure*}
    \centering
    \includegraphics[width=\textwidth,page=9]{fig/quality_rearrange.pdf}
    \caption{Nonlinear Deblurring}
\end{figure*}
\begin{figure*}
    \centering
    \includegraphics[width=\textwidth,page=10]{fig/quality_rearrange.pdf}
    \caption{Nonlinear Deblurring}
\end{figure*}
\begin{figure*}
    \centering
    \includegraphics[width=\textwidth,page=11]{fig/quality_rearrange.pdf}
    \caption{Motion Deblurring}
\end{figure*}
\begin{figure*}
    \centering
    \includegraphics[width=\textwidth,page=12]{fig/quality_rearrange.pdf}
    \caption{Motion Deblurring}
\end{figure*}
\begin{figure*}
    \centering
    \includegraphics[width=\textwidth,page=13]{fig/quality_rearrange.pdf}
    \caption{Gaussian Deblurring}
\end{figure*}
\begin{figure*}
    \centering
    \includegraphics[width=\textwidth,page=14]{fig/quality_rearrange.pdf}
    \caption{Gaussian Deblurring}
\end{figure*}

\end{document}